\definecolor{lightblue}{RGB}{230,247,254}
\newcommand{\citep}[1]{\cite{#1}}
\definecolor{Gray}{gray}{0.85}
\definecolor{LightGray}{gray}{0.95}
\definecolor{LightCyan}{rgb}{0.88,1,1}
\newcolumntype{a}{>{\columncolor{Gray}}c}
\newcolumntype{b}{>{\columncolor{LightGray}}c}
\def\LL{\mathrm{LL}}
\def\MI{\mathrm{MI}}
\def\PL{\mathrm{Pen}}
\def\BIC{\mathrm{BIC}}
\newtheorem{Lemma}{Lemma}
\newtheorem{Theorem}{Theorem}
\newtheorem{Corollary}{Corollary}
\newcommand\idsia{\thanks{Istituto Dalle Molle di studi sull'Intelligenza Artificiale (IDSIA)} \  }
\newcommand\supsi{\thanks{Scuola universitaria professionale della Svizzera italiana (SUPSI)} \  }
\newcommand\usi{\thanks{Universit\`a della Svizzera italiana (USI) } \  }
\newcommand*\samethanks[1][\value{footnote}]{\footnotemark[#1] \  }
\newcommand\f[1]{$\mathbf{#1}$}
\title{Learning Bounded Treewidth Bayesian Networks with Thousands of Variables}
\author{
Mauro Scanagatta \\
IDSIA\idsia , SUPSI\supsi , USI\usi  \\
Lugano, Switzerland\\
\texttt{mauro@idsia.ch} \\
\And
Giorgio Corani\\
IDSIA\samethanks[1], SUPSI\samethanks[2], USI\samethanks[3] \\
Lugano, Switzerland \\
\texttt{giorgio@idsia.ch} 
\And
Cassio P. de Campos \\
Queen's University Belfast \\
Northern Ireland, UK \\
\texttt{c.decampos@qub.ac.uk} \\
\And
Marco Zaffalon\\
IDSIA\samethanks[1] \\
Lugano, Switzerland \\
\texttt{zaffalon@idsia.ch} 
}
\begin{document}

\maketitle

\begin{abstract}  

We present a method for learning treewidth-bounded Bayesian networks from data sets containing thousands of variables.
Bounding the treewidth of a Bayesian greatly reduces the complexity of inferences. 
Yet, being a global property of the graph, it considerably increases the difficulty of the learning process.
We propose a novel algorithm for this task, able to scale to large domains and large treewidths. 
Our novel approach consistently outperforms the state of the art on data sets with up to ten thousand variables. 
\end{abstract}

\section{Introduction}

We consider the problem of structural learning of Bayesian networks with bounded treewidth, adopting a score-based approach.
Learning the structure of a bounded treewidth Bayesian network is a NP-hard problem \citep{korhonen2013}. 
It is therefore  unlikely the existence of an exact algorithm with  complexity polynomial in the number of variables $n$.
Yet learning Bayesian networks with bounded treewidth is deemed necessary to
allow exact tractable inference, since the worst-case inference complexity of
known algorithms is exponential in the treewidth $k$.

The topic has been thoroughly studied in the last years. 
A pioneering approach, polynomial in both the number of variables and the treewidth bound, has been proposed in \citep{elidan2008}.
It provides an upper-bound on the treewidth of the learned structure at each arc addition. The 
limit of this approach is that, as the number of variables increases, the bound becomes too large leading to sparse networks. 
%

An \textit{exact} method has been proposed in \citep{korhonen2013}, which finds the highest-scoring network with the desired treewidth.
However, its complexity increases exponentially with the number of variables $n$.
Thus it has been applied in  experiments with up to only  15 variables. 

\cite{Parviainen2014} adopted an anytime integer linear programming (ILP).
If the algorithm is given enough time, it finds the highest-scoring network with bounded treewidth.
Otherwise it returns a sub-optimal DAG with bounded treewidth. 
The ILP problem has an exponential number of constraints in the number of variables, which limits its scalability.

\cite{NieMCJ14} proposed a more efficient anytime ILP approach with a polynomial number of constraints in the number of variables.
Yet they report that the quality of the solutions quickly degrades as the number of variables exceeds a few dozens and that no satisfactory solutions are found with data sets containing more than 50 variables.

Approximate approaches are therefore needed to scale to larger domains.
\cite{NieCJ15} proposed the approximated method S2.
It exploits the notion of k-tree, which is an undirected maximal graph with treewidth $k$. 
A Bayesian network whose moral graph is a subgraph of a k-tree has thus treewidth bounded by $k$.
S2 is an iterative algorithm. Each iteration consists of two steps: a) sampling uniformly a k-tree from the space of k-trees and b) recovering via sampling a high-scoring DAG whose moral graph is a sub-graph of the sampled k-tree.
The goodness of the k-tree is approximated by using a heuristic evaluation, called \emph{Informative Score}.
%
\cite{nieCJ16} further refines this idea, proposing an exploration guided via A* for finding the optimal k-tree with respect to the Informative Score. This algorithm is called S2+.

Recent structural learning algorithms with \textit{unbounded} treewidth \citep{scanagatta2015a} can cope with thousands of variables. 
Yet the unbounded treewidth provides no guarantee about the complexity of the inferences of the inferred models.
We aim at filling this gap, learning treewidth-bounded Bayesian network models
in domains with thousands of variables. 

Structural learning is usually accomplished in two steps: \emph{parent set identification} and 
\emph{structure optimization}.
Parent set identification produces a list of suitable candidate parent sets for  each variable.
\emph{Structure optimization} assigns a parent set to each node, maximizing the score 
of the resulting structure without introducing cycles. 

Our first contribution regards parent set identification. 
We provide a bound for pruning the sub-optimal parent sets when dealing with the BIC score;
the bound is often tighter than the currently published ones \citep{decampos2011a}.

As a second contribution, we propose two approaches for learning Bayesian networks with bounded treewidth.
They are based on an iterative procedure which is able to add new variables to the current structure, maximizing the resulting score and respecting the treewidth bound.

We compare experimentally our novel algorithms against S2 and S2+, which represent the state of the art on datasets with dozens of variables.
Moreover, we present results for domains involving up to \textit{ten thousand} variables, providing an increase
of two order of magnitudes with respect to the results published to date.
Our novel algorithms consistently outperform the competitors.

\section{Structural learning}

Consider the problem of learning the structure of a Bayesian Network from a complete data set of $N$ instances $\mathcal{D} = \{D_1, ..., D_N\}$. 
The set of $n$ categorical random variables  is $\mathcal{X}=\{X_1, ..., X_n\}$. The goal is to find the best DAG $\mathcal{G} = (V, E)$, where $V$ is the collection of nodes and $E$ is the collection of arcs.
$E$ can be represented by the set of parents ${\Pi_1, ..., \Pi_n}$ of each variable. 

Different scores can be used to assess the fit of a DAG.
We adopt the {\it Bayesian Information Criterion} (or simply $\mathrm{BIC}$),
which asymptotically approximates the posterior probability of the DAG under
common assumptions.
The $\mathrm{BIC}$ score is \emph{decomposable}, being constituted
by the sum of the scores of the individual variables:  
\begin{align*}
& \BIC(\mathcal{G})=\\
& =\sum_{i=1}^{n} \BIC(X_i,\Pi_i)=\sum_{i=1}^n
  \left(\LL(X_i|\Pi_i) + \PL(X_i,\Pi_i)\right)\, , \\
& \LL(X_i|\Pi_i) =\displaystyle\sum\nolimits_{\pi \in |\Pi_i|, ~ x \in
  |X_i|} N_{x,\pi}\log\hat{\theta}_{x|\pi}\, ,\\
& \PL(X_i,\Pi_i) = -\frac{\log N}{2}(|X_i|-1)(|\Pi_i|)\, ,
\end{align*}
where $\hat{\theta}_{x|\pi} $ is the maximum likelihood estimate of the conditional probability $P(X_i=x|\Pi_i=\pi)$, and $N_{x,\pi}$ represents the number of times $(X=x\land\Pi_i=\pi)$ appears in the data set, and $|\cdot|$ indicates the size of the Cartesian product space of the variables given as argument.
Thus $|X_i|$ is the number of states of $X_i$ and $|\Pi_i|$ is the product
of the number of states of the parents of $X_i$.

Exploiting decomposability, we first identify independently for each variable a
list of candidate parent sets (the parent set identification task). Later, we
select for each node the parent set that yields the highest-scoring
treewidth-bounded DAG, which we call structure optimization.

\subsection{Parent sets identification}
When learning with limited treewidth it should be noted 
that the number of parents is a lower bound for the treewidth, since a node and its parents form a clique in the moralized graph. 
Thus, before running the structure optimization task, the list of candidate
parent sets of each node has to include parent sets with size up to $k$, if the
treewidth has to be bounded by $k$ (the precise definition of treewidth will be
given later on).
In spite of that, for values of $k$ greater than 3 or 4, we cannot compute all candidate
parent sets, since it already has time complexity $\Theta(N\cdot n^{k+1})$. In this section we
present the first contribution of this work: a bound for BIC scores that can be
used to prune their evaluations while processing all parent set candidates.
We first need a couple of auxiliary results.

\begin{Lemma}
Let $X$ be a node of $\mathcal{X}$, and $\Pi=\Pi_1\cup \Pi_2$ be a
parent set of $X$ such that $\Pi_1\cap\Pi_2=\emptyset$ and $\Pi_1,\Pi_2\neq\emptyset$.
Then \label{lem0} $\LL(X|\Pi) =$
\[
=\LL(X|\Pi_1) + \LL(X|\Pi_2) - \LL(X) + N\cdot \mathrm{ii}(X;\Pi_1;\Pi_2),
\]
\noindent
where $\mathrm{ii}$ is the {\it Interaction Information} estimated from data.
\end{Lemma}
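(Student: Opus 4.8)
The plan is to rewrite every log-likelihood term as $-N$ times an empirical entropy, so that the claimed identity collapses to a standard information-theoretic identity applied to the empirical distribution $\hat P$ induced by $\mathcal D$. First I would note that for any variable $X$ and any (possibly empty) parent set $\Pi$,
\[
\LL(X\mid\Pi)=\sum_{x,\pi}N_{x,\pi}\log\hat\theta_{x|\pi}
 =N\sum_{x,\pi}\hat P(x,\pi)\log\hat P(x\mid\pi)=-N\,\hat H(X\mid\Pi),
\]
where $\hat P$ is the empirical distribution and $\hat H$ the corresponding Shannon (conditional) entropy; in particular $\LL(X)=-N\hat H(X)$, and therefore $\LL(X\mid\Pi)-\LL(X)=N\big(\hat H(X)-\hat H(X\mid\Pi)\big)=N\,\hat I(X;\Pi)$, the empirical mutual information. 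Substituting these expressions into the statement and dividing by $N$, the lemma becomes equivalent to
\[
\hat I(X;\Pi_1\cup\Pi_2)=\hat I(X;\Pi_1)+\hat I(X;\Pi_2)+\mathrm{ii}(X;\Pi_1;\Pi_2),
\]
i.e. exactly the defining relation $\mathrm{ii}(X;\Pi_1;\Pi_2)=\hat I(X;\Pi_1\cup\Pi_2)-\hat I(X;\Pi_1)-\hat I(X;\Pi_2)$ of the three-way interaction information, evaluated on $\hat P$.

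To close the argument I would recall why that relation holds: by the chain rule for mutual information $\hat I(X;\Pi_1\cup\Pi_2)=\hat I(X;\Pi_1)+\hat I(X;\Pi_2\mid\Pi_1)$, so the right-hand side above equals $\hat I(X;\Pi_1)+\hat I(X;\Pi_2)+\big(\hat I(X;\Pi_2\mid\Pi_1)-\hat I(X;\Pi_2)\big)$, and $\hat I(X;\Pi_2\mid\Pi_1)-\hat I(X;\Pi_2)$ is precisely $\mathrm{ii}(X;\Pi_1;\Pi_2)$ in the sign convention used here. The hypothesis $\Pi_1\cap\Pi_2=\emptyset$ is what lets us treat $\Pi_1$ and $\Pi_2$ as the two groups whose joint interaction with $X$ is being measured, so that $\Pi_1\cup\Pi_2$ as a random vector is just the pair $(\Pi_1,\Pi_2)$; non-emptiness guarantees that $\LL(X\mid\Pi_1)$ and $\LL(X\mid\Pi_2)$ are genuine conditional terms and not $\LL(X)$.

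The whole computation is essentially bookkeeping, and the only point requiring care — the one I would flag as the main (minor) obstacle — is consistency of conventions: that the same empirical distribution, i.e. the same counts $N_{x,\pi}$ and the same $N$, underlies all four log-likelihood terms, and that the interaction information is taken as $\mathrm{ii}=\hat I(X;\Pi_2\mid\Pi_1)-\hat I(X;\Pi_2)$ rather than its negative, since the opposite choice flips the sign of the last term of the lemma. Once that convention is fixed, the identity follows immediately from the two displays above.
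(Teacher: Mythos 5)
Your proof is correct, but it is not the route the paper takes: the paper's entire proof is the one-line remark that the identity ``follows trivially from Theorem 1 in \citep{scanagatta2015a}'', i.e.\ it defers to an external decomposition result rather than deriving anything. Your argument is the self-contained version: writing $\LL(X\mid\Pi)=-N\hat H(X\mid\Pi)$ reduces the claim to $\hat I(X;\Pi_1\cup\Pi_2)=\hat I(X;\Pi_1)+\hat I(X;\Pi_2)+\mathrm{ii}(X;\Pi_1;\Pi_2)$, which is exactly the defining relation of the three-way interaction information under the convention $\mathrm{ii}(X;\Pi_1;\Pi_2)=\hat I(X;\Pi_2\mid\Pi_1)-\hat I(X;\Pi_2)$, obtained via the chain rule. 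What your version buys is transparency on the one point the paper leaves implicit, namely the sign convention for $\mathrm{ii}$: the lemma is false under the opposite convention, and your derivation pins down which one is in force (and is consistent with the bounds $\LL(\Pi_1)\leq N\cdot\mathrm{ii}\leq-\LL(\Pi_1)$ the paper invokes immediately afterwards, since $\lvert\mathrm{ii}\rvert\leq\hat H(\Pi_1)$ under that convention). The hypotheses $\Pi_1\cap\Pi_2=\emptyset$ and $\Pi_1,\Pi_2\neq\emptyset$ play exactly the bookkeeping role you assign them. No gap.
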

\begin{proof}
It follows trivially from Theorem 1 in~\citep{scanagatta2015a}.
\end{proof}

It is known that $\LL(\Pi_1) \leq N\cdot\mathrm{ii}(\Pi_{1}; \Pi_{2}; X)\leq
-\LL(\Pi_1)$, and that the order of arguments is irrelevant (that is,
$\mathrm{ii}(\Pi_{1}; \Pi_{2}; X)=\mathrm{ii}(\Pi_{2}; \Pi_{1}; X)=\mathrm{ii}(X; \Pi_{1}; \Pi_{2})$).
These inequalities provide bounds for the log-likelihood in line with the result presented in Corollary
1 of ~\citep{scanagatta2015a}. We can manipulate that result to obtain new
tigher bounds.

\begin{Lemma}
Let $X,Y_1,\ldots,Y_t$ be nodes of $\mathcal{X}$, and $\Pi\neq\emptyset$ be a
parent set for $X$ with $\Pi\cap\mathcal{Y}=\emptyset$, where $\mathcal{Y}=\{Y_1,\ldots,Y_t\}$. Then \label{lem1}
$\LL(X|\Pi\cup\mathcal{Y}) \leq \LL(X|\Pi) + \sum_{i=1}^t w(X,Y_i)$, where
$w(X,Y_i)=\MI(X,Y_i) - \max\{\LL(X);\LL(Y_i)\}$,
\noindent where $\MI(X,Y_i)=\LL(X|Y_i) - \LL(X)$ is the empirical mutual information.
\end{Lemma}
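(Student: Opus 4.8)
The plan is to reduce the general statement to the case $t=1$ by induction, and to prove the base case by combining Lemma~\ref{lem0} with the stated bounds on interaction information. For the inductive step, write $\Pi\cup\mathcal{Y} = (\Pi\cup\{Y_1,\ldots,Y_{t-1}\})\cup\{Y_t\}$ and apply the $t=1$ bound with parent set $\Pi'=\Pi\cup\{Y_1,\ldots,Y_{t-1}\}$ (which is nonempty and disjoint from $\{Y_t\}$, as required), obtaining $\LL(X|\Pi\cup\mathcal{Y})\leq \LL(X|\Pi') + w(X,Y_t)$; then the induction hypothesis applied to $\Pi'=\Pi\cup\{Y_1,\ldots,Y_{t-1}\}$ gives $\LL(X|\Pi')\leq \LL(X|\Pi)+\sum_{i=1}^{t-1}w(X,Y_i)$, and the two combine to yield the claim. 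So the whole argument rests on the single-variable case.

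For the base case $t=1$, set $\Pi_1=\Pi$ and $\Pi_2=\{Y_1\}$ in Lemma~\ref{lem0}. Since $\Pi\neq\emptyset$ and $\Pi\cap\{Y_1\}=\emptyset$, the hypotheses of Lemma~\ref{lem0} are met, and we get
\[
\LL(X|\Pi\cup\{Y_1\}) = \LL(X|\Pi) + \LL(X|Y_1) - \LL(X) + N\cdot\mathrm{ii}(X;\Pi;Y_1).
\]
Now use $\LL(X|Y_1)-\LL(X) = \MI(X,Y_1)$ by the definition given in the statement, so the right-hand side becomes $\LL(X|\Pi) + \MI(X,Y_1) + N\cdot\mathrm{ii}(X;\Pi;Y_1)$. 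It remains to bound $N\cdot\mathrm{ii}(X;\Pi;Y_1)$ from above. By the symmetry of interaction information we may reorder the arguments freely; using the stated inequality $N\cdot\mathrm{ii}(\cdot;\cdot;\cdot)\leq -\LL(\cdot)$ applied with the argument we choose to isolate, we get $N\cdot\mathrm{ii}(X;\Pi;Y_1)\leq -\LL(X)$ and also $N\cdot\mathrm{ii}(X;\Pi;Y_1)\leq -\LL(Y_1)$, hence $N\cdot\mathrm{ii}(X;\Pi;Y_1)\leq -\max\{\LL(X);\LL(Y_1)\}$. Substituting,
\[
\LL(X|\Pi\cup\{Y_1\}) \leq \LL(X|\Pi) + \MI(X,Y_1) - \max\{\LL(X);\LL(Y_1)\} = \LL(X|\Pi) + w(X,Y_1),
\]
which is exactly the $t=1$ statement.

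The only delicate point — the "main obstacle," such as it is — is making sure the reordering of arguments in $\mathrm{ii}$ is legitimate when one of the arguments ($\Pi$) is itself a set rather than a single variable: one should check that the bound $\LL(Z_1)\leq N\cdot\mathrm{ii}(Z_1;Z_2;Z_3)\leq -\LL(Z_1)$ quoted in the excerpt holds with $Z_1$ being \emph{any one} of the three arguments (equivalently, that $\mathrm{ii}$ is genuinely symmetric in all three slots even when slots hold joint variables), which is precisely what the excerpt asserts. Granting that, the proof is just the substitution above plus the routine induction; no further estimates are needed. I would also note in passing that $w(X,Y_i)\leq 0$ always, since $\MI(X,Y_i)=\LL(X|Y_i)-\LL(X)\le -\LL(X)\le \max\{\LL(X);\LL(Y_i)\}$ is not immediate — rather $\MI(X,Y_i)\ge 0$ in the population but the empirical sign and magnitude are controlled by the same $\mathrm{ii}$-type bounds — so the lemma is genuinely a pruning bound only when the $w$ terms are negative, a point worth a remark but not part of the proof itself.
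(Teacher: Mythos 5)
Your proof is correct and is essentially the paper's own (one-line) argument made explicit: successive application of Lemma~\ref{lem0}, peeling off one node of $\mathcal{Y}$ at a time, with the symmetry of $\mathrm{ii}$ and the stated bound $N\cdot \mathrm{ii}(X;\Pi;Y_i)\leq -\max\{\LL(X);\LL(Y_i)\}$ supplying each $w(X,Y_i)$ term. One correction to your closing aside, which you rightly flag as outside the proof: the quantities $w(X,Y_i)$ are in fact nonnegative (empirical $\MI(X,Y_i)=\LL(X|Y_i)-\LL(X)\geq 0$ since adding a conditioning variable cannot decrease the maximized log-likelihood, and $-\max\{\LL(X);\LL(Y_i)\}\geq 0$ since log-likelihoods are nonpositive), so the lemma is an upper bound on the log-likelihood \emph{gain} from enlarging the parent set, and the pruning power arises only once the penalty terms are folded in as in Theorem~\ref{lem2}, not from the $w$'s being negative.
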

\begin{proof}
It follows from the bounds of $\mathrm{ii}(\cdot)$ and the successive application of Lemma~\ref{lem0} to $\LL(X|\Pi\cup\mathcal{Y})$, taking out one node of $\mathcal{Y}$ a time.
\end{proof}

The advantage of Lemma~\ref{lem1} is that $\MI(X,Y_i)$ and $\LL(X)$ and $\LL(Y_i)$ (and hence $w(X,Y_i)$) can be all precomputed efficiently in total time $O(N\cdot n)$ for a given $X$, and
since BIC is composed of log-likelihood plus penalization (the latter is
efficient to compute), we obtain a new means of bounding BIC scores as follows.

\begin{Theorem}
Let $X\in\mathcal{X}$, and $\Pi\neq\emptyset$ be
a parent set for $X$, $\Pi_0=\Pi\cup\{Y_0\}$ for some
$Y_0\in\mathcal{X}\setminus\Pi$, and
$Y'=\max_{Y\in\mathcal{X}\setminus\Pi_0} \left(w(X,Y)+\PL(X,\Pi\cup\{Y\})\right)$.
If $w(X,Y_0) +\PL(X,\Pi_0)\leq \PL(X,\Pi)$ and
$w(X,Y') +\PL(X,\Pi\cup\{Y'\})\leq 0$, with $w(\cdot)$ as defined in Lemma~\ref{lem1},
then $\Pi_0$ and any of its supersets are not optimal.
 \label{lem2}\end{Theorem}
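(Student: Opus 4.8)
\emph{Proof proposal.} The plan is to show that $\Pi_0$, and every superset of it, is dominated in BIC score by one of its proper subsets; this is exactly what ``not optimal'' means here, since replacing a parent set of $X$ by a proper subset in any DAG cannot create a cycle, cannot add edges to the moral graph (so cannot increase the treewidth), and --- as we will see --- cannot decrease the total (decomposable) score. Concretely I would establish the two inequalities $\BIC(X,\Pi_0)\le\BIC(X,\Pi)$ and $\BIC(X,\Pi_0\cup\mathcal{Z})\le\BIC(X,\Pi_0)$ for every nonempty $\mathcal{Z}\subseteq\mathcal{X}\setminus\Pi_0$; since $\Pi\subsetneq\Pi_0\subsetneq\Pi_0\cup\mathcal{Z}$, chaining these settles everything. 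Throughout I would use that every categorical variable has at least two states, whence $|\Pi_0|=|\Pi|\,|Y_0|\ge 2|\Pi|$ and $|Z|-1\ge|Z|/2$ for any variable $Z$, and that the case $N=1$ is trivial (all scores vanish).

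For the first inequality I would apply Lemma~\ref{lem1} with $\mathcal{Y}=\{Y_0\}$ --- permissible since $Y_0\notin\Pi\neq\emptyset$ --- to get $\LL(X|\Pi_0)\le\LL(X|\Pi)+w(X,Y_0)$, then add $\PL(X,\Pi_0)$ to both sides and invoke the first hypothesis $w(X,Y_0)+\PL(X,\Pi_0)\le\PL(X,\Pi)$ to collapse the right-hand side to $\LL(X|\Pi)+\PL(X,\Pi)=\BIC(X,\Pi)$.

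For the supersets I would argue by induction on $|\mathcal{Z}|$, adding its elements one at a time, the inductive step being: for any parent set $\Gamma\supseteq\Pi_0$ and any $Z\in\mathcal{X}\setminus\Gamma$, $\BIC(X,\Gamma\cup\{Z\})\le\BIC(X,\Gamma)$. By Lemma~\ref{lem1} with base $\Gamma$ and $\mathcal{Y}=\{Z\}$, this reduces to showing $w(X,Z)\le\PL(X,\Gamma)-\PL(X,\Gamma\cup\{Z\})=\frac{\log N}{2}(|X|-1)\,|\Gamma|\,(|Z|-1)$. Here the second hypothesis enters: since $Z\in\mathcal{X}\setminus\Pi_0$, it is an admissible argument in the maximisation attained at $Y'$, so $w(X,Z)+\PL(X,\Pi\cup\{Z\})\le w(X,Y')+\PL(X,\Pi\cup\{Y'\})\le 0$, that is, $w(X,Z)\le-\PL(X,\Pi\cup\{Z\})=\frac{\log N}{2}(|X|-1)\,|\Pi|\,|Z|$. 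Finally $\frac{\log N}{2}(|X|-1)\,|\Pi|\,|Z|\le\frac{\log N}{2}(|X|-1)\,|\Gamma|\,(|Z|-1)$ because $|\Gamma|\ge|\Pi_0|\ge 2|\Pi|$ and $|Z|-1\ge|Z|/2$, and transitivity delivers the inductive step; starting from $\Gamma=\Pi_0$ gives $\BIC(X,\Pi_0\cup\mathcal{Z})\le\BIC(X,\Pi_0)$.

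The routine part is the first inequality; the main obstacle is the inductive step for the supersets, because the second hypothesis bounds $w(X,Z)$ only against the (lighter) penalty of extending $\Pi$, whereas the induction forces us to pay the (heavier) penalty of extending the larger set $\Gamma\supseteq\Pi_0$. What saves the argument is the multiplicative growth of the parent state-space size --- passing from $\Pi$ to $\Pi_0$ multiplies it by $|Y_0|\ge 2$ --- which, together with the elementary bound $|Z|-1\ge|Z|/2$, exactly absorbs the extra penalty, and the slack only widens as $\Gamma$ grows. Beyond that, one only needs to re-verify the nonemptiness and disjointness side-conditions of Lemma~\ref{lem1} at each application.
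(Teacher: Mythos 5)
Your proof is correct and rests on the same ingredients as the paper's own argument: Lemma~\ref{lem1} to bound each log-likelihood increment by $w$, the two hypotheses of the theorem, and the multiplicative (super-additive in magnitude) growth of the penalty term coming from state spaces of size at least two. The only difference is organizational: the paper establishes $\BIC(X,\Pi')\leq\BIC(X,\Pi)$ in one chain of inequalities, bounding $\PL(X,\Pi')$ by $\PL(X,\Pi_0)+\sum_{i=1}^{t}\PL(X,\Pi\cup\{Y_i\})$ and the remaining sum by $t$ times the maximal term at $Y'$, whereas you add the extra variables one at a time by induction (using $|\Gamma|\left(|Z|-1\right)\geq|\Pi|\,|Z|$ at each step) --- the two organizations are interchangeable and prove the same monotonicity.
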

\begin{proof}
Suppose $\Pi'=\Pi_0\cup\mathcal{Y}$, with $\mathcal{Y}=\{Y_1,\ldots,Y_t\}$ and
$\mathcal{Y}\cap\Pi_0=\emptyset$ ($\mathcal{Y}$ may be empty). 
We have that
\begin{align*}
\BIC&(X,\Pi')= LL(X|\Pi') + \PL(X,\Pi')\\
\leq & LL(X|\Pi') + \PL(X,\Pi_0)+\sum_{i=1}^t \PL(X,\Pi\cup\{Y_i\})\\
\leq &\LL(X|\Pi) +\PL(X,\Pi_0)+w(X,Y_0) +\\
& \sum_{i=1}^t \left(w(X,Y_i) + \PL(X,\Pi\cup\{Y_i\})\right)\\
\leq &\BIC(X,\Pi) + t\left(w(X,Y') + \PL(X,\Pi\cup\{Y'\})\right)\\
\leq &\BIC(X,\Pi).
\end{align*}
First step is the definition of BIC, second step uses the fact that the penalty
function is exponentially fast with the increase in number of parents, third
step uses Lemma~\ref{lem1}, fourth step uses the assumptions of the theorem and
the fact that $Y'$ is maximal. Therefore we would choose $\Pi$ in place of
$\Pi_0$ or any of its supersets.
\end{proof}

Theorem~\ref{lem2} can be used to discard parent sets during already their
evaluation and without the need to wait for precomputing all possible
candidates. We point out that these bounds are new and not trivially
achievable by current existing bounds for BIC. As a byproduct, we obtain bounds
for the number of parents of any given node.

\begin{Corollary}
Using BIC score, each node has at most 
$O(\log N - \log\log N)$ parents in the optimal structure.
\label{lem3}
\end{Corollary}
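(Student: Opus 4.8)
The plan is to derive the bound on the number of parents from Lemma~\ref{lem1} (or equivalently from the ingredients of Theorem~\ref{lem2}) by comparing a candidate parent set of size $s$ against a much smaller one — ideally the empty set or a singleton — and showing that once $s$ is of order $\log N$ the penalty term dominates, so the larger set cannot be optimal. First I would take an arbitrary node $X$ with a parent set $\Pi$ of size $s$. Applying Lemma~\ref{lem1} with a singleton $\Pi=\{Y_1\}$ and $\mathcal{Y}=\{Y_2,\dots,Y_s\}$, we get $\LL(X\mid\Pi) \le \LL(X\mid Y_1) + \sum_{i=2}^s w(X,Y_i)$, and each $w(X,Y_i)=\MI(X,Y_i)-\max\{\LL(X),\LL(Y_i)\}$. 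The key observation is that each $\MI(X,Y_i)$ is bounded by $N\log|X|$ (it cannot exceed the empirical entropy of $X$, which is at most $N\log|X|$), and likewise $\LL(X\mid Y_1)\le 0$. So the whole log-likelihood of the size-$s$ parent set is at most something like $(s-1)\cdot N\log|X|$ up to constants — crucially linear in $s$ and in $N$, but with no exponential growth.

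Next I would compare $\BIC(X,\Pi)$ against $\BIC(X,\emptyset)$ (or against $\BIC$ of a singleton parent). The penalty satisfies $\PL(X,\Pi) = -\tfrac{\log N}{2}(|X|-1)|\Pi|$, and $|\Pi|$ is a product of at least $2^{s}$ (each parent has at least two states), so $\PL(X,\Pi)\le -\tfrac{\log N}{2}(|X|-1)2^{s}$. Thus
\[
\BIC(X,\Pi) \;\le\; C\cdot s\cdot N\log|X| \;-\; \frac{\log N}{2}(|X|-1)2^{s},
\]
for a constant $C$. For $\Pi$ to be optimal this must be at least $\BIC(X,\emptyset)=\LL(X)\ge -N\log|X|$, which forces $\tfrac{\log N}{2}(|X|-1)2^{s}$ to be $O(sN\log|X|)$, i.e. $2^{s} = O\!\left(\frac{sN}{\log N}\right)$. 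Taking logarithms gives $s \le \log_2 N + \log_2 s - \log_2\log N + O(1)$, and since $\log_2 s$ is itself $O(\log\log N)$ when $s=O(\log N)$, this self-consistently yields $s = O(\log N - \log\log N)$, which is the claimed bound. (Treating $|X|$ as a constant of the domain, so that it is absorbed into the $O(\cdot)$.)

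The main obstacle — and the only place that needs care rather than routine manipulation — is making the telescoping bound on $\LL(X\mid\Pi)$ genuinely linear in $s$ with the right constant, and being explicit about what is held fixed. One must check that the $\max\{\LL(X),\LL(Y_i)\}$ terms in $w(X,Y_i)$ do not spoil the bound (they only help, since they are subtracted), and that $\MI(X,Y_i)\le N\log|X|$ (equivalently $\le N H(X)$) is the right crude bound to use. The resolution of the implicit inequality $2^{s} = O(sN/\log N)$ into $s = O(\log N - \log\log N)$ is a standard asymptotic argument: write $s = \log_2 N - \log_2\log N + r$ and verify that $r$ must be $O(\log\log N)$, or simply note $s \le \log_2(N) + o(\log N)$ already forces $\log_2 s = \log_2\log_2 N + o(1)$, closing the loop. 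Everything else is bookkeeping with the two displayed formulas for $\LL$ and $\PL$.
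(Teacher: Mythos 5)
Your proof is correct, but it takes a genuinely different route from the paper's. The paper obtains the corollary as a direct application of Theorem~\ref{lem2}: it bounds $w(X,Y)$ by $O(N\log|X|)$ for every candidate extra parent $Y$ and observes that once $|\Pi|\geq \frac{2(N+1)\log|X|}{(|X|-1)\log N}$ the marginal penalty increment $\PL(X,\Pi)-\PL(X,\Pi\cup\{Y\})$ exceeds this gain, so the pruning conditions of the theorem hold, no superset of $\Pi$ can be optimal, and since $\log|\Pi|$ dominates the number of parents the threshold is $O(\log N-\log\log N)$. You instead bypass Theorem~\ref{lem2} and argue globally: telescope Lemma~\ref{lem1} down to a singleton to get $\LL(X|\Pi)=O(sN\log|X|)$ for a size-$s$ parent set, compare against the empty parent set, and solve $2^{s}=O(sN/\log N)$. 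Both arguments rest on the same two ingredients (the bound $w(X,Y)=O(N\log|X|)$ and the exponential growth of $|\Pi|$ in $s$); the paper's one-parent-at-a-time comparison yields the threshold directly, whereas your global comparison produces a self-referential inequality whose resolution leaves an extra $+\log_2 s=O(\log\log N)$ term---harmless for the stated $O(\cdot)$ class (which in any case equals $O(\log N)$, as $\log N-\log\log N=\Theta(\log N)$), but slightly looser in the lower-order terms. One small slip: the terms $-\max\{\LL(X),\LL(Y_i)\}$ in $w$ do not ``only help''---the log-likelihoods are nonpositive, so subtracting them makes $w$ larger---but they are bounded above by $N\log|X|$ (via $\max\{\LL(X),\LL(Y_i)\}\geq\LL(X)\geq -N\log|X|$), so your constant $C$ absorbs them and the argument stands; likewise, dropping $\PL(X,\emptyset)=-\frac{\log N}{2}(|X|-1)$ from $\BIC(X,\emptyset)$ is immaterial.
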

\begin{proof}
Let $X$ be a node of $\mathcal{X}$ and $\Pi$ a possible parent set. Let $Y\in\mathcal{X}\setminus\Pi$.
From the fact that
$\MI(X,Y) \leq \log |X|$, and $\max\{LL(X);LL(Y)\}
\geq -N\cdot \log |X|$, we have that $w(X,Y)\leq (N+1) \log |X|$,
 with $w(\cdot)$ as defined in Lemma~\ref{lem1}. Now
\begin{align*}
&\log |\Pi| \geq \log\left(\frac{2 \log |X|}{|X|-1}\right) + \log\left(\frac{N+1}{\log N}\right)\iff\\
&(N+1) \log |X| \leq \frac{\log N}{2}\cdot |\Pi|(|X|-1) \Longrightarrow\\
&w(X,Y)\leq -\PL(X,\Pi\cup\{Y\})+ \PL(X,\Pi)
\end{align*}
\noindent for any $Y$, and so by Theorem~\ref{lem2}
no super set of $\Pi$ is optimal. Note that $\log |\Pi|$ is greater than or
equal to the number of parents in $\Pi$, so we have
proven that any node in the optimal structure has at most $O(\log N - \log\log
N)$, which is similar to previous known results (see e.g.~\citep{decampos2011a}).
\end{proof}

\subsection{Treewidth and $k$-trees}

We use this section to provide the necessary definitions and notation.\\
\paragraph{Treewidth}
We illustrate the concept of treewidth following the notation
of \citep{elidan2008}.
We denote an undirected graph as $\mathcal{H}=(V,E)$ 
where $V$ is the vertex set and $E$ is the edge set.
A \textit{tree decomposition} of $H$ 
is a pair ($\mathcal{C},\mathcal{T}$) where $\mathcal{C}=\{C_1,C_2,...,C_m\}$ 
is a collection of subsets of $V$ and $T$ is a tree on $\mathcal{C}$, so that:
\begin{itemize}
	\item  $ \cup_{i=1}^{m} \,\, C_i=V$;
	\item for every edge which
	connects the vertices $v_1$ and $v_2$, 
	 there is a subset $C_i$ which contains both $v_1$ and $v_2$;	
	\item for all $i,j,k$ in $\{1,2,..m \}$ if $C_j$ is on the  path between 
	$C_i$ and $C_k$ in $\mathcal{T}$ then $C_i \cap C_k  \subseteq C_j$.
\end{itemize}
The width of  a tree decomposition is $\max(|C_i|)-1$ where
$|C_i|$ is the number of vertices in $C_i$.
The treewidth of $H$ is the minimum width 
among all possible tree decompositions of $G$.	

The treewidth can be equivalently defined
in terms of triangulation of $\mathcal{H}$.
A triangulated graph is an undirected graph in which every cycle of length greater than three contains a chord.
The treewidth of a triangulated graph is the size of the maximal clique of the graph minus one.
The treewidth of  $\mathcal{H}$ is the minimum
treewidth over all the possible triangulations of $\mathcal{H}$.

The treewidth of a Bayesian network is characterized with respect to all possible triangulations of its moral graph.
The moral graph $M$ of a DAG  is an undirected graph that includes an edge ($i\rightarrow j$) for every edge ($i\rightarrow j$) in  the DAG and an edge ($p \rightarrow q$) for every pair of edges ($p\rightarrow i$), ($q\rightarrow i$) in the DAG.
The treewidth of a DAG is
the minimum
treewidth over all the possible triangulations of its moral graph $\mathcal{M}$.
Thus the maximal clique of any moralized triangulation of $\mathcal{G}$
is an upper bound on the treewidth of the model.

\paragraph{$k$-trees}
An undirected graph $T_k=(V, E)$ is a $k$-tree
if it is a \textit{maximal} graph of tree-width $k$:
any edge added  to $T_k=(V, E)$ increases its treewidth.  

A $k$-tree is inductively defined as follows \citep{patil1986}.
Consider a ($k+1$)-clique, namely
a complete graph with $k+1$ nodes.
A ($k+1$)-clique is a $k$-tree.

A ($k+1$)-clique can be decomposed into multiple $k$-cliques.
Let us denote by $z$ a node not yet included in the list of vertices $V$.
Then the graph obtained by connecting $z$ to every node 
of a $k$-clique of $T_k$ is also a $k$-tree. 

The treewidth of 
any subgraph of a $k$-tree (\textit{partial $k$-tree}) is bounded by $k$.
Thus a DAG whose triangulated moral graph is subgraph of a $k$-tree 
has treewidth bounded by $k$.

\section{Incremental treewidth-bounded structure learning}

We now turn our attention to the structure optimization task.
Our approach proceeds by repeatedly sampling an order $ \prec $ over the variables and then 
identifying the highest-scoring DAG with bounded-treewidth consistent with the order.
The size search space of the possible orders is $n!$, thus smaller than the search space of the possible k-trees. 
Once the order is sampled, we incrementally learn the DAG; it is guaranteed that
at each step the moralization of the DAG is a subgraph of a $k$-tree. The
treewidth of the DAG eventually obtained is thus bounded by $k$. The algorithm proceeds as follows.

\paragraph{Initialization}
The initial k-tree $\mathcal{K}_{k+1}$ is constituted by 
the complete clique over the first $k+1$ variables in the order.
The initial DAG  $\mathcal{G}_{k+1}$ is learned over the same  $k+1$ variables.
Since ($k+1$) is a small number of variables, we can exactly learn $\mathcal{G}_{k+1}$. In particular we adopt the method
of \cite{cussens11}.
The moral graph of  $\mathcal{G}_{k+1}$ is a subgraph of $\mathcal{K}_{k+1}$ and thus $\mathcal{G}_{k+1}$ has bounded treewidth.

\paragraph{Node's addition}
We then iteratively add each remaining variable. 
Consider the next variable in the order, $X_{\prec i}$, where $i \in \{ k+2, ..., n \}$.
Let us denote by  $\mathcal{G}_{i-1}$ and $\mathcal{K}_{i-1}$ 
the DAG and the k-tree which have to be updated by adding $X_{\prec i}$. 
We add $X_{\prec i}$ to $\mathcal{G}_{i-1}$, under the constraint that its parent set
$\Pi_{\prec i}$ is a subset of a complete $k$-clique in $\mathcal{K}_{i-1}$. 
This yields the updated DAG $\mathcal{G}_{i}$. 
We then update the k-tree, connecting 
$X_{\prec i}$ to such $k$-clique.
This yields the updated k-tree
$\mathcal{K}_i$; it contains an additional
$k+1$-clique compared to $\mathcal{K}_{i-1}$.  
By construction, $\mathcal{K}_i$ is also a $k$-tree. 
The moral graph of  $\mathcal{G}_i$ cannot add arc outside this $(k+1)$-clique; thus it is a subgraph of $\mathcal{K}_i$.

\paragraph{Pruning orders}
Notice that $\mathcal{K}_{k+1}$ and $\mathcal{G}_{k+1}$
depend only on which are the first $k+1$ variables and not on their relative positions.
Thus all the orders which differ only as for the relative 
position of the first $k+1$ elements are \textit{equivalent}
for our algorithm.
Thus  once we have sampled an order  and identified the corresponding DAG, we 
can prune the remaining $(k+1)!-1$ equivalent orders.

In order to choose the parent set to be assigned to each variable added to the graph we propose two algorithms: k-A* and k-G.

\subsection{k-A*}

We formulate the problem as a \emph{shortest path finding} problem.  
We define each state as a step towards the completion of the structure, where a new variable is added to the DAG $\mathcal{G}$. 
Given $X_{\prec i}$ the variable assigned in the state $S$, we define a successor state of $S$ for each $k$-clique we can choose for adding the variable $X_{\prec i+1}$.
The approach to solve the problem is based on a path-finding A* search, with cost function for state $S$ defined as $ f(S) = g(S) + h(S)$.  The goal is the state minimizing $f(S)$ where all the variable have been assigned. 

$g(S)$ is the cost from the initial state to $S$, and we define it as the sum of scores of already assigned parent sets: 
\begin{align*}
 g(S) = \sum\limits_{j=0}^{i} score(X_{\prec j}, \Pi_{\prec j})\, .
\end{align*}

$h(S)$ is the estimated cost from $S$ to the goal. It is the sum of best assignable parent sets for the remaining variables. 
Note that we know that $X_a$ can have $X_b$ as parent only if $X_b \prec X_a$: 
\begin{align*}
 g(S) = \sum\limits_{j=i+1}^{n} best(X_{\prec j})\, .
\end{align*}

The algorithm uses an \emph{open} list to store the search frontier. 
At each step it recovers the state with the smallest $f$ cost, generate the successors state and insert them into \emph{open}, until the optimal is found. 

The A* approach requires the $h$ function to be \emph{admissible}. 
The function h is  admissible if the estimated cost is never greater than the true cost to the goal state.
Our approach guarantees this property since the true cost of each step (score of chosen parent set for $X_{\prec i+1}$) is always equal or greater than the estimated (score of best selectable parent set for $X_{\prec i+1}$). 

We also have that $h$ is \emph{consistent}, meaning that for any state $S$ and its successor $T$, $h(S) \leq h(T)+c(S, T)$, where $c(S, T)$ is the cost of the edges added in $T$. 
This follows from the previous argument. 
Now we have that $f$ is monotonically non-decreasing on any path, and the algorithm is guaranteed to find the optimal path as long as the goal state is reachable.

%

%


\subsection{k-G}

In some cases a high number of variables or a high treewidth 
prevent the use of k-A*.
We thus propose a greedy alternative approach, K-G. 
Following the path-finding problem defined previously, it takes a greedy approach: at each step chooses for the variable $X_i$ the highest-scoring parent set that is subset of an existing $k$-clique in $\mathcal{K}$.

\subsection{Space of learnable DAGs}
A \emph{reverse topological order} is an order $\{v_1 , . . . v_n \}$ over the vertexes $V$ of a DAG in which each $v_i$ appears before its parents $\Pi_{i}$.
The search space of our algorithms is restricted to the DAGs whose reverse topological order, when used as variable elimination order, has treewidth $k$. 
This prevents recovering DAGs which have bounded treewidth but lack this property.

We start by proving by induction that the reverse topological order has treewidth $k$ in the DAGs recovered by our algorithms.
Consider the incremental construction of the DAG previously discussed.

The initial DAG $\mathcal{G}_{k+1}$ is induced over $k+1$ variables; thus every elimination ordering has treewidth bounded by $k$.

For the inductive case, assume that $\mathcal{G}_{i-1}$ satisfy the property.
Consider the next variable in the order, $X_{\prec_{i}}$, where $i \in \{ k+2, ..., n \}$.
Its parent set $\Pi_{\prec_{i}}$ is a subset of a $k$-clique in $\mathcal{K}_{i-1}$.
The only neighbors of $X_{\prec_{i}}$ in the updated DAG $\mathcal{G}_i$
are its parents $\Pi_{\prec_{i}}$.
Consider performing variable elimination on the
the moral graph of $\mathcal{G}_i$, using a 
reverse topological order.
Then $X_{\prec_{i}}$ will be eliminated before $\Pi_{\prec_{i}}$, without introducing fill-in edges. Thus the treewidth associated to any reverse topological order is bounded by $k$.
This property inductively applies to the addition also of the following nodes 
up to $X_{\prec_{n}}$.

\paragraph{Inverted trees}
An example of DAG non recoverable by our algorithms is the 
specific class of polytrees that we call \textit{inverted trees}, that is, DAGs
with indegree equal to one.
An inverted tree with $m$ levels and treewidth $k$ can be built as follows.
Take the root node (level one) and connect it to $k$ child nodes (level two).
Connect each node of level two to $k$ child nodes (level three).
Proceed in this way up to the m-th level and then invert the direction of all the arcs.

Figure \ref{fig:reverse} shows an inverted tree with $k$=2 and $m$=3.  
It has treewidth two, since its moral graph is constituted by the cliques
\{A,B,E\}, \{C,D,F\}, \{E,F,G\}.
The treewidth associated to the reverse topological order is instead three, using
the order G, F, D, C, E, A, B.

\begin{figure}[!ht]
	\centering
	\begin{tikzpicture}		  
	
	\node[draw, circle] (A) at (0, 2) {A};
	\node[draw, circle] (B) at (1, 2) {B};
	\node[draw, circle] (C) at (2, 2) {C};
	\node[draw, circle] (D) at (3, 2) {D};
	\node[draw, circle] (E) at (0.5, 1) {E};	 
	\node[draw, circle] (F) at (2.5, 1) {F};	
	\node[draw, circle] (G) at (1.5, 0) {G};
	
	\draw[->, line width=1pt] (A) to (E);
	\draw[->, line width=1pt] (B) to (E);
	\draw[->, line width=1pt] (C) to (F);
	\draw[->, line width=1pt] (D) to (F);
	\draw[->, line width=1pt] (E) to (G);
	\draw[->, line width=1pt] (F) to (G);
	\end{tikzpicture}
	
	\caption{Example of inverted tree.}
	\label{fig:reverse}
\end{figure}
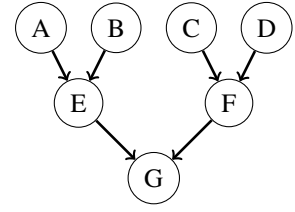	      

If we run our algorithms with bounded treewidth $k$=2, it will be unable to recover the actual inverted tree. It will instead identify a high-scoring DAG whose
reverse topological order has treewidth 2. 

\subsection{Our implementation of S2 and S2+}

Here we provide the details of our implementation of S2 and S2+. 
They both use the notion of Informative Score \citep{NieCJ15}, an approximate measure of the fitness of a k-tree. The I-score of a k-tree $T_k$ is defined as 
\begin{align*}
 IS(T_k) = \frac{S_{mi}(T_k)}{|S_l(T_k)|}\, ,
\end{align*}
where $S_{mi}(T_k)$ measures the expected loss of representing the data with the k-tree. Let $I_{ij}$ denote the mutual information of node $i$ and $j$:
\begin{align*}
S_{mi}(T_k) = \sum_{i, j} I_{ij} - \sum_{i, j \notin T_k} I_{ij}\, .
\end{align*}
$S_l(T_k)$ instead is defined as the score of the best pseudo subgraph of the k-tree by dropping the acyclic constraint: 
\begin{align*}
S_l(T_k) = \max_{m(G) \in T_k} \sum_{i \in N} score(X_i, \Pi_i)\, ,
\end{align*}
where $m(G)$ is the moral graph of DAG $G$, and $score(X_i, \Pi_i)$ is the local score function of variable $X_i$ for the parent set $\Pi_i$.

The first phase of both S2 and S2+ consists in a k-tree sampling.
In particular, S2 obtains k-trees by using the Dandelion sampling discussed in \citep{NieMCJ14}. The proposed k-trees are then accepted with probability: 
\begin{align*}
 \alpha = min \left( 1, \frac{IS(T_k)}{IS(T^*_k)} \right)\, ,
\end{align*}
where $T^*_k$ is the current k-tree with the largest I-score \citep{NieCJ15}. 

Instead S2+ selects the $k+1$ variables with the largest I-score and finds the k-tree maximizing the I-score from this clique, as discussed in \citep{nieCJ16}. Additional k-trees are obtained choosing a random initial clique. 

The second phase of the algorithms looks for a DAG whose moralization is subgraph of the chosen k-tree.
For this task, the authors proposed an approximate approach based on partial order sampling (Algorithm 2 of \citep{NieMCJ14}). 
In our experiments, we found that using Gobnilp for this task yields slightly higher scores, therefore we adopt this approach in our implementation.
We believe that it is due to the fact that constraining the structure
optimization to a subjacent graph of a k-tree results in a small number of allowed arcs for the DAG.
Thus our implementation finds the highest-scoring DAG 
whose moral graph is a subgraph of the provided k-tree. 

\subsubsection{Discussion}


The problem with k-tree sampling is that each k-tree enforces a random constraint over the arcs that may appear in the final structure.
The chance that we randomly sample a k-tree that allows good scoring arcs
becomes significantly smaller as the number of variables increases, and the
space of possible k-tree increases as well.
The criterion for probabilistic acceptance, presented in the past section, has
been proposed for tackling this issue, but it does not resolve the situation completely.

Our approach instead focus immediately on selecting the best arcs, in a way that guarantees the treewidth bound. 
Experimentally we observed that k-tree sampling is quicker, producing an higher number of candidate DAGs, whose scores are unfortunately low. Our approach instead generates less but higher-scoring DAGs. 

\citep{nieCJ16} improves on the notion of k-tree, searching for the optimal one
with respect to the \emph{Informative Score} (IS). IS considers only the mutual
information between pair of variables, and it may exaggerate the importance of
assigning some arcs. The IS criterion may suggest parents for a node with
separately have high mutual information but are bad together as a parent set. 

%
%
%

\section{Experiments}
We compare k-A*, k-G, S2 and S2+ in various experiments. 
We compare them through an indicator which we call \emph{W-score}:
the percentage of worsening of the BIC score of the selected treewidth-bounded method compared to the score of the Gobnilp  
solver \citep{cussens11}. 
Gobnilp achieves higher score than the treewidth-bounded methods since it has no limits on the treewidth.
Let us denote by $G$ the BIC score achieved by Gobnilp and by 
$T$ the BIC score obtained by the given treewidth-bounded method.
Notice that both $G$ and $T$ are negative.
The W-score is $W=\frac{G-T}{G}$. 
W stands for worsening and thus lower values of $W$ are better.
The lowest value of W is zero, while there is no upper bound on the value of W.

\begin{table*}[!ht]
	\rowcolors{2}{white}{lightblue}
	\centering
	\begin{tabular}{c c r r r r r}\toprule
		
		DATASET & VAR. 	& GOBNILP 			& S2 				& S2+ 				& k-G			& k-A* \\ \midrule
		nursery & 	9 & 	\f{-72159} & 	\f{-72159} & 	\f{-72159} & 	\f{-72159} & 	\f{-72159} \\  
		breast & 	10 & 	\f{-2698} & 	\f{-2698} & 	\f{-2698} & 	\f{-2698} & 	\f{-2698} \\ 
		housing & 	14 & 	-3185 & 	-3252 & 	-3247 & 	-3206 & 	\f{-3203} \\  
		adult & 	15 & 	-200142 & 	-201235 & 	-200926 & 	-200431 & 	\f{-200363} \\	  
		letter & 	17 & 	-181748 & 	-189539 & 	-186815 & 	-183369 & 	\f{-183241} \\  
		zoo & 	17 & 	-608 & 	-620 & 	-619 & 	-615 & 	\f{-613} \\  
		mushroom & 	22 & 	-53104 & 	-68670 & 	-64769 & 	-57021 & 	\f{-55785} \\  
		wdbc & 	31 & 	-6919 & 	-7213 & 	-7209 & 	-7109 & 	\f{-7088} \\  
		audio & 	62 & 	-2173 & 	-2283 & 	-2208 & 	-2201 & 	\f{-2185} \\  
		community & 	100 & 	-77555 & 	-107252 & 	-88350 & 	-82633 & 	\f{-82003} \\  
		hill & 	100 & 	-1277 & 	-1641 & 	-1427 & 	-1284 & 	\f{-1279} \\  
		
		\bottomrule
	\end{tabular}
	\caption{Comparison between bounded-treewidth structural learning algorithms on the data sets already analyzed by \citep{nieCJ16}.The highest-scoring solution with limited treewidth is boldfaced. In the first column as term of comparison we report the score of the solution obtained by Gobnilp without bound on the treewidth.	
		\label{table:networks-exp-datasets}}
\end{table*}

\subsection{Learning inverted trees}

As already discussed our approach
cannot learn an inverted tree with $k$ parents per node
if given bounded treewidth $k$.
In this section we study their performance in this worst-case scenario.

We start with treewidth $k=2$. 
We consider the number of variables $n \in \{21,41,61,81,101\}$.
For each value of $n$ we
generate 5 different inverted trees. 
An inverted tree is generated by randomly selecting a root variable $X$ from the existing graph and adding $k$ new variables as $ \Pi_X $, until the graph contains $n$ variables. 
All variables are binary and we sample their conditional probability tables from a Beta(1,1).
We sample 10,000 instances from  each generated inverted tree.

We then perform structural learning with k-A*, k-G, S2 and S2+, setting $k=2$ as limit on the treewidth.
We allow each method to run for ten minutes.
Both S2 and S2+ could in principle recover the true structure, which is prevented to our algorithms.
The results are shown in Fig.\ref{fig:inverted}. 
Qualitatively similar results are obtained repeating the experiments with $k=4$.

\begin{figure}[!ht]
	\centering
	\includegraphics{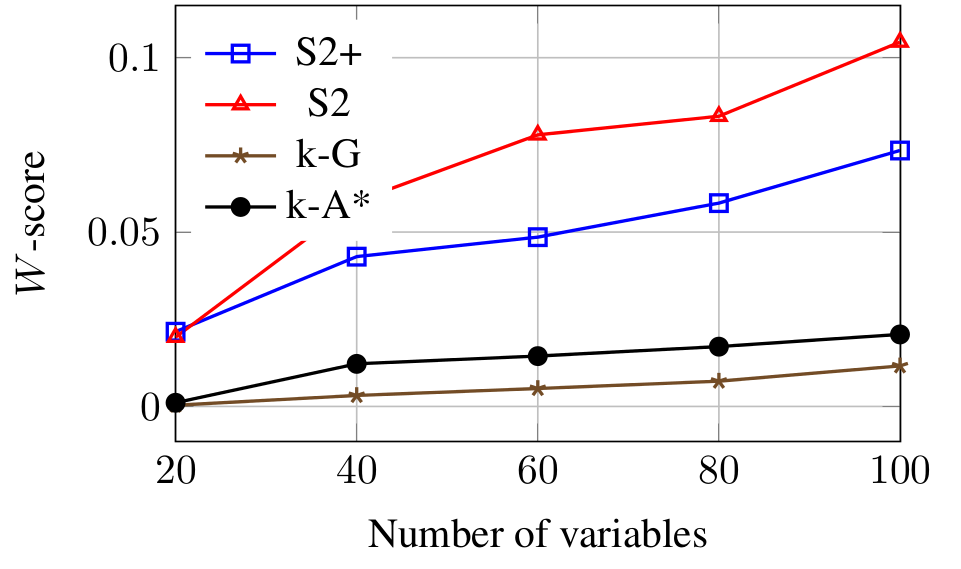}
	\caption{Structural learning results when the actual DAGs are inverted trees ($k$=2). Each point represent the mean W-score over 5 experiments. Lower values of the $W$-score are better.}
	\label{fig:inverted}    
\end{figure}

Despite the unfavorable setting, both k-G and k-A* yield DAGs with higher score than S2 and S2+, consistently for each value of $n$. Thus the limitation of the space of learnable DAGs 
does not hurt much the performance of k-G and k-A*. 
In fact S2 could \textit{theoretically} recover the actual DAG, but this would
require too many samples from the space of the k-trees, which is prohibitive.

\begin{table}[!ht]
	\rowcolors{2}{white}{lightblue}
	\centering
	\begin{tabular}{c c r r r r r}
		\toprule
		&		    S2 &		 S2+ &	k-G &	k-A* \\ 
		\midrule
		Iterations & 803150 &	3 &			7176 &		66		\\
		Median & 	-273600 & 	-267921 & 	-261648 & 	-263250  \\ 
		Max & 		-271484 & 	-266593 & 	-258601 &	-261474 \\
		\bottomrule
	\end{tabular}
	\caption{Statistics of the solutions yielded by different methods on an inverted tree ($n=100$, $k=4$).} 
	\label{tab:comp-inverted}
\end{table}

We further investigate the differences between methods 
by providing in Table \ref{tab:comp-inverted} some statistics about 
the candidate solutions they generate. 
\emph{Iterations} is the number of proposed solutions; for S2 and S2+ it is the number of explored k-trees, while for k-G and k-A* it is number of explored orders. 

During the execution, S2 samples almost one million k-trees.
Yet it yields the lowest-scoring DAGs among the different methods.
This can be explained considering that a randomly sampled k-tree has a low chance to cover 
a high-scoring DAG.
S2+ recovers only a few k-trees, but their scores are higher than those of S2. This 
confirms the effectiveness of driving the search for good k-trees through the Informative Score.
As we will see later, however, this idea does not scale on very large data sets.

As for our methods, k-G samples a larger number of orders than k-A* does
and this allows it to achieve higher scores, even if 
it sub-optimally deals with each single order.


\subsection{Small data sets}
We now present experiments on the data sets already considered by \citep{nieCJ16}.
They involve up to 100 variables. 
We set the bounded treewidth to $k=4$.
We provide each structural learning method with the same pre-computed scores of parent sets. 
We allow each method to run for ten minutes.
We perform 10 experiments on each data set and we report the median scores in Table~\ref{table:networks-exp-datasets}.
Our results are not comparable with those reported by \citep{nieCJ16} since we use the BIC
while they use BDeu. 

Remarkably both k-A* and k-G achieve higher scores than both S2 and S2+ do on almost all data sets.
Only on the smallest data sets all methods achieve the same score.
Between our two novel algorithms, k-A* has a slight advantage over k-G.

We provide statistics about the candidate solutions generated by each method in 
Table~\ref{tab:comp}. The results of the table refer
in particular to the \emph{community} data set ($n$=100).
The conclusions are similar to those of previous analyses.
S2 performs almost one million iterations, but they are characterized by low scores.
S2+ performs a drastically smaller number of iterations, but is able anyway 
to outperform S2. 
Similarly k-A* is more effective than k-G, despite generating a lower number of candidate solution.
The reduced number of candidate solutions generated by both S2+ and k-A* suggest that they cannot scale
on data sets much larger than those of this experiment.

\begin{table}[!ht]
	\rowcolors{2}{white}{lightblue}
	\centering
	\begin{tabular}{c c r r r r r}

		\toprule
		&		    S2 &		 S2+ &	k-G &	k-A* \\ 
		\midrule
		Iterations & 945716 &	3 &			3844 &		87		\\
		Median & 	-115887 & 	-85546 & 	-85332 & 	-84771  \\ 
		Max & 		-107840 & 	-85270 & 	-82863 &	-82452 \\
		\bottomrule
	\end{tabular}
	\caption{Statistics of the solutions yielded by different methods on the community data set ($n$=100).}
	\label{tab:comp}
\end{table}

\subsection{Large data sets}
We now consider  10 large data sets ($100\leq n \leq 400$) listed
in Table~\ref{tab:large-dsets}.

 \begin{table}[!ht]
 	\rowcolors{2}{white}{lightblue}
 	\centering
 	\begin{tabular}{crcr}
 		\toprule
 		Data set & $n$ & Data set & $n$  \\
 		\midrule
 		Audio     & 100 & Pumsb-star & 163    \\
 		Jester    & 100 & DNA        & 180  \\
 		Netflix   & 100 & Kosarek    & 190  \\
 		Accidents & 111 & Andes    & 223  \\
 		Retail    & 135 & MSWeb     & 294 \\
 		
 		\bottomrule
 	\end{tabular}
 	\caption{\textit{Large} data sets sorted according to the number of variables.
 		\label{tab:large-dsets}}
 	
 \end{table}
 
We consider the following treewidths: $k \in \{2,5,8\}$.
We split each data set randomly into three subsets.
Thus for each treewidth
we run 10$\cdot$3=30 structural learning experiments.

We provide all structural learning methods with the same pre-computed scores of parent sets and we let each method run for one hour.
For S2+, we adopt a more favorable approach, allowing it to run for one hour; if after one hour  the first k-tree was not yet solved, we allow it to run until it has solved the first k-tree.

In Table~\ref{tab:large-victories} we report how many times each method wins against another for each treewidth. The entries are boldfaced when the number of victories of an algorithm over another is  statistically significant according to the sign-test (p-value \textless 0.05).
Consistently for any chosen treewidth, k-G is significantly better than any competitor, including k-A*; moreover, k-A* is significantly better than both S2 and S2+.
 
 \begin{table}[!ht]
 	\rowcolors{2}{white}{lightblue}
 	\centering
 	\begin{tabular}{lllll}
 		\toprule
 		& k-A*              & S2                & S2+               &  \\
 		\midrule
 		k-G  & \textbf{29/20/24} & \textbf{30/30/29} & \textbf{30/30/30} &  \\
 		k-A* &                   & \textbf{29/27/20} & \textbf{29/27/21} &  \\
 		S2   &                   &                   & 12/13/\textbf{30} &  \\
 		\bottomrule
 	\end{tabular}
 	\caption{Result on the 30 experiments on large data sets. Each cell report how many times
 		the row algorithm yields a higher score than the column algorithm for treewidth 2/5/8.
 		For instance k-G wins on all the 40 data sets against S2 for each considered treewidth. }
 	\label{tab:large-victories}
 \end{table}
 
 This can be explained by considering that k-G explores more orders than k-A*, as for a given order it only finds 
an approximate solution. The results suggest that it is more important to
explore many orders instead of obtaining the optimal DAG given an order.

\subsection{Very large data sets}


As final experiment, we consider 14 very large data sets, containing more than 400 variables.
We include in these experiments three randomly-generated synthetic data sets 
containing 2000, 4000 and 10000 variables respectively.
These networks have been generated using the software BNGenerator \footnote{\url{http://sites.poli.usp.br/pmr/ltd/Software/BNGenerator/}}. 
Each variable has a number of states
randomly drawn from 2 to 4 and a number of parents randomly drawn from 0 to 6.
In this case, we perform 14$\cdot$3=42 structural learning experiments with each algorithm.
The only two algorithms able to cope with these data sets are k-G and S2.
Among them, k-G wins 42 times out of 42; this dominance is clearly significant.
This result is consistently found under each choice of treewidth ($k=$2, 5, 8). 
On average, the improvement of k-G over S2 fills about 60\% of the gap which separates S2 from the unbounded solver.

The W-scores of such 42 structural learning experiments are summarized
in Figure~\ref{fig:boxplots}. 
For both S2 and k-G, a larger treewidth allows to recover a higher-scoring graph.
In turn this decreases the W-score.
However k-G scales better than S2 with respect to the treewidth; its W-score
decreases more sharply with the treewidth.

\begin{table}[!ht]
	\rowcolors{2}{white}{lightblue}
\centering
	\begin{tabular}{crcr}
\toprule
 Data set & $n$ & Data set & $n$  \\
\midrule 
Diabets    & 	413 & 	C20NG      & 910 \\
Pigs   & 	441 & 	Munin & 1041  \\
Book      & 	500  & 	BBC        & 1058 \\
EachMovie & 	500 & 	 Ad         & 1556  \\
Link & 		724 & 	R2     & 2000   \\
WebKB & 	839  &  R4      & 4000   \\
Reuters-52 & 	889 & 	R10 & 10000 \\
\bottomrule
\end{tabular}
 \caption{ Very large data sets sorted according to the number $n$ of variables.}
\end{table}

\pgfplotsset{every tick label/.append style={font=\small}}
\begin{figure}[ht]
  \centering
\includegraphics{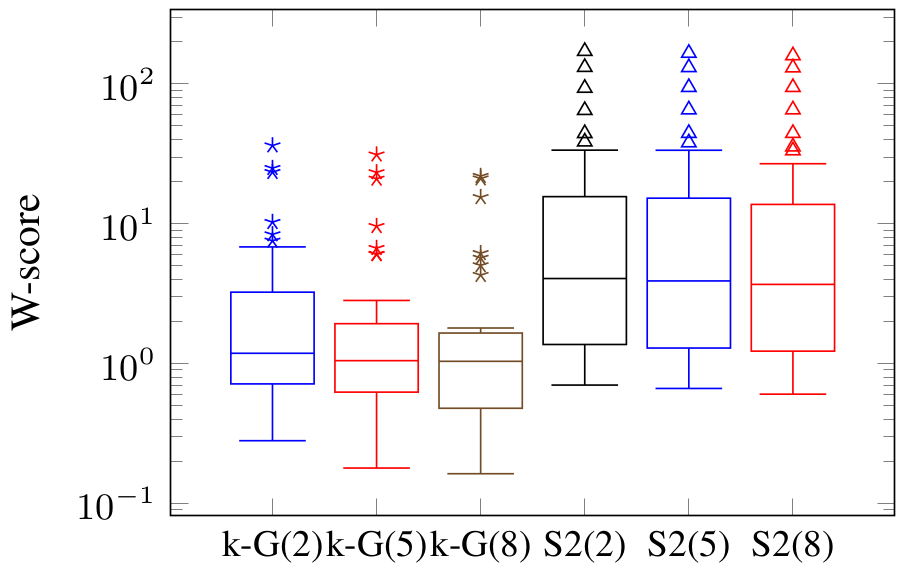}
	\caption{Boxplots of the W-scores, summarizing the results over 14$\cdot$3=42 structural learning experiments on very large data sets. Lower W-scores are better. The y-axis is shown in logarithmic scale.
	In the label of the x-axis we also report the adopted treewidth for each method: 2, 5 or 8.}
	\label{fig:boxplots}    
\end{figure}
  
It is interesting to analyze the statistics of the solutions generated
by the two methods. They are given in Table~\ref{tab:comp-mulin} for the data set Munin.
K-G generates a number of solutions which is a few orders of magnitude smaller than
that of S2. Yet, the scores of the obtained solutions are much higher.

\begin{table}[!ht]
	\rowcolors{2}{white}{lightblue}
	\centering
	\begin{tabular}{c c r r r r r}

		\toprule
		&		    S2 &	k-G  \\ 
		\midrule
		Iterations & 63637	& 	83\\
		Median & 	-6324236 & 	-3302131  \\ 
		Max & 		-6262538 & 	-2807518 \\
		\bottomrule
	\end{tabular}
	\caption{Statistics of the solutions yielded by different methods on the Munin data set ($n$=1041).}
	\label{tab:comp-mulin}
\end{table}
%

\section{Conclusion}

Our novel approaches for treewidth-bounded structural learning of
Bayesian Networks perform significantly better than state-of-the-art
methods. The greedy approach scales up to thousands of nodes and suggests
that it is more important to find good k-trees than to solve the internal structure
optimization task for each one of them. The methods consistently outperform
the competitors on a variety of experiments. All these methods and others
for unbounded learning of Bayesian networks can make use of our new bounds
for BIC scores in order to reduce the number of parent set evaluations during the precomputation of
scores. Further analyses of the bounds are left for future work.

%

\bibliographystyle{myplainnat}
\bibliography{biblio}

\end{document}